\DeclareMathOperator{\softmax}{softmax}
\newtheorem{thm}{Theorem}
\newtheorem{lemma}[thm]{Lemma}
\title{Fourier Neural Networks: A Comparative Study}
\author{
  Abylay Zhumekenov,  Malika Uteuliyeva, Rustem Takhanov, Zhenisbek Assylbekov, Alejandro J. Castro \\
  Department of Mathematics, Nazarbayev University
   \And
 Olzhas Kabdolov \\
  BTS Digital
}
\begin{document}
\maketitle

\begin{abstract}
We review neural network architectures which were motivated by Fourier series and integrals and which are referred to as Fourier neural networks. These networks are empirically evaluated in synthetic and real-world tasks. Neither of them outperforms the standard neural network with sigmoid activation function in the real-world tasks. All neural networks, both Fourier and the standard one, empirically demonstrate  lower approximation error than the truncated Fourier series when it comes to  approximation of a known function of multiple variables.
\end{abstract}

\section{Introduction}

Over the past few years, neural networks have re-emerged as powerful machine-learning models, yielding state-of-the-art results in fields such as computer vision, speech recognition, and natural language processing. In this work we explore several neural network architectures, the authors of which were inspired by Fourier series and integrals. Such architectures will be collectively referred to as \textit{Fourier Neural Networks} (\textit{FNNs}). First FNNs were proposed in 80s and 90s, but they are not widely used nowadays. Is there any reasonable explanation for this, or were they  simply not given enough attention? To answer this question we perform empirical evaluation of the FNNs, found in the existing literature, on synthetic and real-world datasets. We are mainly interested in the following hypotheses: Is any of the FNNs superior to others? Does any FNN outperform conventional feedforward neural network with the logistic sigmoid activation? 
Our experiments show that the FNN of \citet{gallant1988there} outperforms all other FNNs, and that \textit{all} FNNs are not better than  the standard feedforward neural architecture with sigmoid activation function except the case of  modeling synthetic data.

%

\section{Preliminaries}
\textbf{Notation.} We let $\mathbb{Z}$ and $\mathbb{R}$ denote the integer and real numbers, respectively. Bold-faced letters ($\mathbf{x}$, $\mathbf{y}$) denote vectors in $d$-dimensional Euclidean space $\mathbb{R}^d$, and plain-faced letters ($x$, $f$) denote either scalars or functions. 
$\langle\cdot,\cdot\rangle$ denotes inner product: $\langle\mathbf{x},\mathbf{y}\rangle:=\sum_{j=1}^dx_j y_j$; and $\|\cdot\|$, $\|\cdot\|_2$ denote the Euclidean norm: $\|\mathbf{x}\|:=\|\mathbf{x}\|_2:=\sqrt{\langle\mathbf{x},\mathbf{x}\rangle}$. 

\noindent\textbf{Feedforward Neural Networks.} Following a standard convention, we define a \textit{feedforward neural network} with one hidden layer of size $n$ on inputs in $\mathbb{R}^d$ as
\begin{equation}
\mathbf{x}\mapsto v_0 + \sum_{k=1}^n v_k\sigma(\langle\mathbf{x}, \mathbf{w}_k\rangle+b_k)\label{feedforward}
\end{equation}
where $\sigma(\cdot)$ is the activation function, and $v_k, b_k \in\mathbb{R}$, $\mathbf{w}_k\in\mathbb{R}^d$, $k = 1,\ldots, n$, are parameters of the network. The universal approximation theorem (\citet{hornik1989multilayer}; \citet{cybenko1989approximation}) states that a feedforward network (\ref{feedforward}) with any ``squashing'' activation function $\sigma(\cdot)$, such as the logistic sigmoid  function, can approximate any Borel measurable function $f(\mathbf{x})$ with any desired non-zero amount of error, provided that the network is given enough hidden layer size $n$. Universal approximation theorems have also been proved for a wider class of activation functions, which includes the now commonly used $\mathrm{ReLU}$ (\citet{leshno1993multilayer}). The  neural network \eqref{feedforward} with logistic sigmoid activation $\sigma(x):=1/(1+e^{-x})$ is referred to as \textit{standard} or \textit{vanilla feedforward neural network}.

\noindent\textbf{Fourier Series.} Let $f(\mathbf{x})$ be a function integrable in the $d$-dimensional cube $[-\pi,\pi]^d$. The \textit{Fourier series} of the function $f(\mathbf{x})$ is the series
\begin{equation}
\sum_{\mathbf{k}\in\mathbb{Z}^d}\hat{f}_\mathbf{k} e^{i\langle \mathbf{x},\mathbf{k}\rangle},\label{fourier_series}
\end{equation}
where the numbers $\hat{f}_\mathbf{k}$, called \textit{Fourier coefficients}, are defined by
$$
\hat{f}_\mathbf{k}:=(2\pi)^{-d}\underset{[-\pi,\pi]^d}{\int} f(\mathbf{y}) e^{-i\langle \mathbf{y},\mathbf{k}\rangle}d\mathbf{y},
$$
Conceptually, the feedforward neural network with one hidden layer (\ref{feedforward}) and the partial sum of the Fourier series (\ref{fourier_series}) are similar in a sense that both are linear combinations of non-linear transformations of the input $\mathbf{x}$. The major differences between them are as follows: 
\begin{itemize}
\item The Fourier series has a direct access to the function $f(\mathbf{x})$ being approximated, whereas the neural network does \textit{not} have it --- instead it is usually given a training set of pairs $\{\mathbf{x}_i, f(\mathbf{x}_i)+\epsilon_i\}$, where $\epsilon_i$ is a noise (error).
\item The coefficients and linear transformations of the input in the Fourier series are fixed, but they are trainable in the neural network and are subject to estimation based on the training set $\{\mathbf{x}_i, f(\mathbf{x}_i)+\epsilon_i\}$.
\end{itemize}
There exists a variety of results on convergence of different types of partial sums (rectangular, square, spherical) of the multiple Fourier series (\ref{fourier_series}) to $f(\mathbf{x})$ in various senses (uniform, mean, almost everywhere). We refer the reader to the works of \citet{alimov1976convergence} and \citet{alimov1977problems} for a survey of such results. It seems that the existence of such convergence guarantees has motivated several authors to design the activation functions for (\ref{feedforward}) in such a way that the resulting neural networks mimic the behavior of the Fourier series (\ref{fourier_series}). In the next section we give a brief overview of such networks.

\section{Fourier Neural Networks}\label{FNN}
\textbf{FNN of \citet{gallant1988there}}:
The earliest attempt on making a neural network resemble the Fourier series is due to \citet{gallant1988there} who have suggested the ``cosine squasher''
\begin{equation}
\sigma_{\rm GW}(x):=\begin{cases}
0, &x\in(-\infty, -\frac{\pi}{2}),\\
\frac{1}{2}\left(\cos\left(x+\frac{3\pi}{2}\right)+1\right), &x\in[-\frac{\pi}{2}, \frac{\pi}{2}],\\
1, &x\in(\frac{\pi}{2}, +\infty),
\end{cases}\label{gw_activ}
\end{equation}
as an activation function in the feedforward network (\ref{feedforward}). Moreover, they show that when additionally the connections $\mathbf{w}_i$, $b_i$ from input to hidden layer are hardwired in a special way, the obtained feedforward network yields a Fourier series approximation to a given function $f(\mathbf{x})$. Thus, such networks possess all the approximation properties of Fourier series representations. In particular, approximation to any desired accuracy of any square integrable function can be achieved by such a network, using sufficiently many hidden units. \citet{mccaffrey1994convergence} showed that the squared approximation error for sufficiently smooth functions is of order $O(n^{-1})$, where $n$ is the network's hidden layer size. We notice here that \citet{barron1993universal} has established the same order of the approximation error for the feedforward networks with any sigmoidal activation\footnote{A bounded measurable function $\phi(x)$ on the real line is called \textit{sigmoidal} if $\phi(x)\to1$ as $x\to+\infty$ and $\phi(x)\to0$ as $x\to-\infty$.} and when the function being approximated $f(\mathbf{x})$ has a bound on the first moment of the magnitude distribution of the Fourier transform. FNN of \citet{gallant1988there} is denoted as $f_{\rm GW}$.

\noindent\textbf{FNN of \citet{silvescu1999fourier}}:
Another attempt to mimic the behavior of the Fourier series by a neural network was done by \citet{silvescu1999fourier}, who introduced the following FNN:
\begin{equation}
f_{\rm S}: \mathbf{x}\mapsto v_0 + \sum_{k=1}^n v_k \sigma_{\rm S}(\mathbf{x}; \boldsymbol{\omega}_k, \boldsymbol{\phi}_k),\label{silvescu}
\end{equation}
with
\begin{equation}
\sigma_{\rm S}(\mathbf{x};\boldsymbol{\omega}_k, \boldsymbol{\phi}_k):=\prod_{j=1}^d\cos(\omega_{kj}x_j+\phi_{kj}),\label{silvescu_activation}
\end{equation}
where $\boldsymbol{\omega}_k, \boldsymbol{\phi}_k\in\mathbb{R}^d$, $v_k\in\mathbb{R}$ are trainable parameters. As we can see, Silvescu's FNN (\ref{silvescu}) does not follow the framework of the standard feedforward neural networks (\ref{feedforward}), and moreover its activation function is not sigmoidal. Figure~\ref{feedforward_vs_silvescu} depicts the difference between (\ref{feedforward}) and (\ref{silvescu}) for the case when $d=3$ and $n=2$. 
\begin{figure}[htbp]
\begin{center}
\includegraphics[width=.9\textwidth]{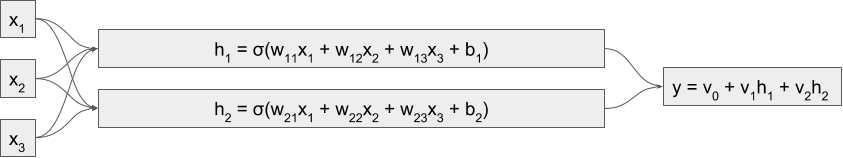}

\vspace{20pt}
\includegraphics[width=.9\textwidth]{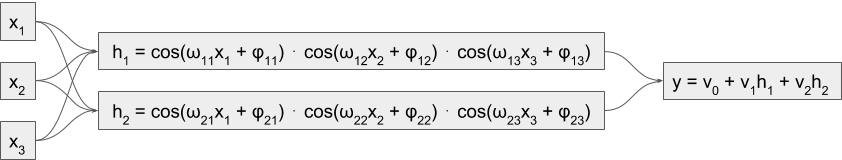}
\end{center}
\caption{Standard feedforward NN (top) vs Silvescu's Fourier NN (bottom). In the standard NN non-linearity is applied on top of the linear transformation of the \textit{whole} input, whereas in the Silvescu's network non-linearity is applied \textit{separately to each} component of the input vector.}
\label{feedforward_vs_silvescu}
\end{figure}
Because of this difference, the result of \citet{barron1993universal} is not applicable to Silvescu's FNN. However, we conjecture that the same convergence rate is valid for the Silvescu's FNN. 
The proof (or disproof) of this conjecture is deferred to our future work.

\noindent\textbf{FNN of \citet{liu2013fourier}}:
More recently, several authors suggested the following architecture
\begin{equation}
f_{\rm L}:\mathbf{x}\mapsto v_0+\sum_{k=1}^n v_k\cos(\langle\mathbf{w}_k,\mathbf{x}\rangle+b_k) + u_k\sin(\langle\mathbf{p}_k,\mathbf{x}\rangle+q_k),\label{zuo_cai}
\end{equation}
where $\mathbf{w}_k,\mathbf{p}_k\in\mathbb{R}^d$, $b_k, q_k\in\mathbb{R}$ are either hardwired or trainable, and $v_k,u_k\in\mathbb{R}$ are trainable parameters. \citet{tan2006fourier} explored aircraft engine fault
diagnostics using (\ref{zuo_cai}), \citet{zuo2005tracking}, \citet{zuo2008adaptive}, \citet{zuo2009fourier} used it for the control of a class of uncertain nonlinear systems. The above-mentioned authors did not provide rigorous mathematical analysis of this architecture, instead they used it as an ad-hoc solution in their engineering tasks. Although this FNN fits into the general feedforward framework \eqref{feedforward}, its activations are not sigmoidal, and thus the result of \citet{barron1993universal} is not applicable here as well. 
\citet{liu2013fourier} empirically evaluated (\ref{zuo_cai}) on various datasets and showed that in certain cases it converges faster
than the feedforward network with sigmoid activation and has equally good predicting accuracy and generalization ability. Also, only in the work of \citet{liu2013fourier} all the weights in \eqref{zuo_cai} are allowed to be trainable, hence we refer to this architecture as $f_{\rm L}$.

\section{Empirical Evaluation}
In this section we will perform empirical evaluation of the Fourier neural networks $f_{\rm GW}$, $f_{\rm S}$, $f_{\rm L}$ from Section~\ref{FNN} against vanilla feedforward network (\ref{feedforward}) with sigmoid activation\footnote{I.e. we put $\sigma(x):=\frac{1}{1+\exp(-x)}$ in (\ref{feedforward}).} on synthetic and real-world datasets. By ``synthetic datasets'' we mean datasets generated from a \textit{known} function. In this case we can also compare the performance of Fourier neural networks to the approximation error given by the partial Fourier series.

\subsection{Synthetic tasks}
We try to approximate a function of one variable $x\mapsto|x|$, $x\in[-\pi,\pi]$, and a function of $d=100$ variables: $\mathbf{x}\mapsto\mathbb{I}[\|\mathbf{x}\|\le1]$, $\mathbf{x}\in\{\mathbf{x}\in\mathbb{R}^{100}:\,\|\mathbf{x}\|\le2\}$, where $\mathbb{I}[\cdot]$ is the indicator function.\footnote{This means that $\mathbb{I}[\|\mathbf{x}\|\le1]=1$ if $\|\mathbf{x}\|\le1$, and $0$ otherwise.} In both cases we sampled $5\cdot 10^5$ data instances uniformly from the domains of the functions. To each instance, we associated a target value according to the target function $|x|$ or $\mathbb{I}[\|\mathbf{x}\|\le1]$. Another $10\cdot10^4$ examples were generated in a similar manner, of which $5\cdot10^4$ examples were used as a validation set, and $5\cdot10^4$ examples were used as a test set. We trained 32 networks on these datasets: for each of the above-mentioned models (vanilla feedforward network, $f_{\rm GW}$, $f_{\rm S}$, $f_{\rm L}$) we varied the hidden layer size from 100 to 800 with the step 100. Training was performed with Adam optimizer (\citet{kingma2014adam}). 
We used the squared loss $l(y,\hat{y}):=(y-\hat{y})^2$ and batches of size 100. For each model, a learning rate was tuned separately on the validation set. The results are presented in Figure~\ref{synth_data_results}. 
\begin{figure*}
\begin{center}
\includegraphics[height=0.18\textheight]{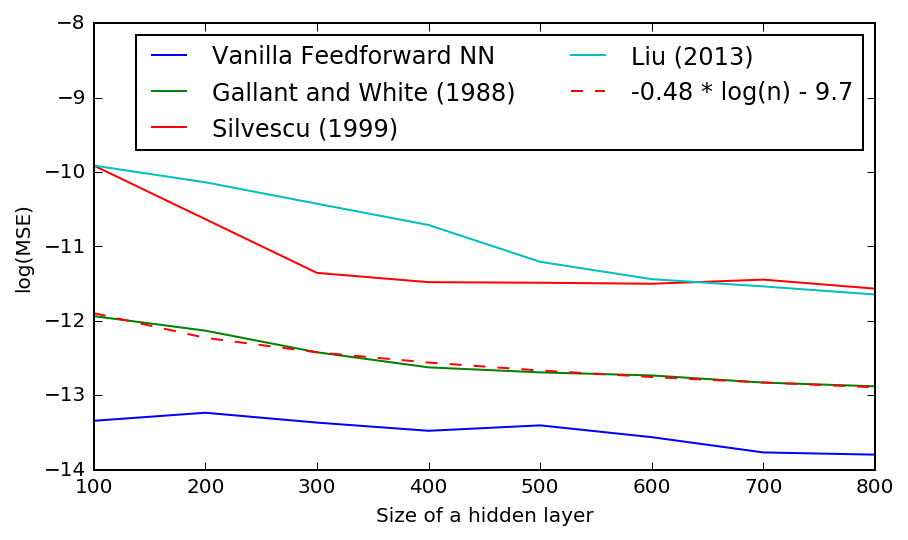}\  \includegraphics[height=0.18\textheight]{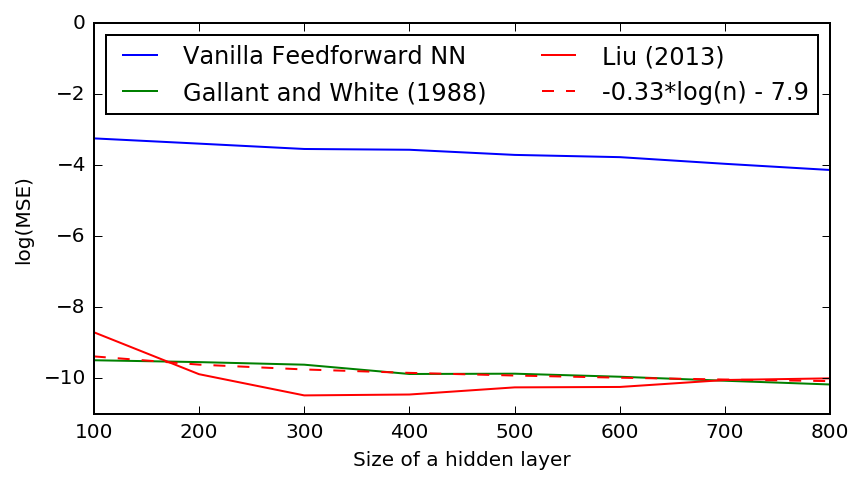}
\end{center}
\caption{Results of approximating $|x|$ (left) and $\mathbb{I}[\|\mathbf{x}\|\le1]$ (right) by Fourier neural networks and Fourier series. MSE stands for the mean squared error, $\frac{1}{T}\sum_{i=1}^T(y_i-\hat{y}_i)^2$. Dashed curves were obtained by regressing $\log(\mathrm{MSE})$ of $f_{\rm GW}$ on $\log n$. $^\ast$Evaluation of $f_{\rm S}$ for the indicator function data is in progress.}
\label{synth_data_results}
\end{figure*}
Vanilla feedforward network \eqref{feedforward} obtains lowest mean squared error (MSE) for $|x|$, whereas the FNN of \citet{gallant1988there} outperforms all other models for $\mathbb{I}[\|\mathbf{x}\|\le1]$. According to the regression fits (dashed curves in Fig.~\ref{synth_data_results}), the function $x\mapsto|x|$ is approximated by the neural networks with error $O(n^{-0.48})$, and this is much worse than the approximation error given by the partial sums of the Fourier series of $f(x)=|x|$, which, according to Lemma~\ref{abs_value_approx_lemma} below, is of order $O(n^{-3})$. For the function $\mathbf{x}\mapsto\mathbb{I}[\|\mathbf{x}\|\le1]$  results are to other way around: the approximation error by the neural networks is of order $O(n^{-0.33})$, while it is of the order $O(n^{-1/100})$ by the truncated Fourier series (see Lemma~\ref{lemma3} below). We keep in mind that the theoretical result of \citet{barron1993universal} states that for any function from a certain class\footnote{Functions with bounded first moment of the magnitude distribution of the Fourier transform, which we refer to as \textit{Barron functions} in agreement with \citet{lee2017ability}.} (to which the indicator function does belong) a feedforward neural network with one hidden layer of size $n$ will be able to approximate this function with a squared error of order $O(n^{-1})$. We are not guaranteed, however, that the training algorithm will be able to \textit{learn} that function. Even if the neural network is able to \textit{represent} the function, learning can fail, since the optimization algorithm used for training may not be able to find the value of the parameters that corresponds to the desired function. We attribute the mismatch, $O(n^{-1/3})$ instead of $O(n^{-1})$, between the orders of approximation errors to the suboptimal estimation of the parameters of the networks by the Adam optimizer. However, in general we have experimentally confirmed Barron's claim that neural networks with $n$ hidden units can approximate functions with much smaller error than series expansions with $n$ terms.

We also notice here that directly comparing neural networks with truncated Fourier series  is somewhat unfair, as these are two different categories of approximation: Fourier series serve as some theoretical reference, which is possible \textit{only} when we have access to the function being approximated.

\begin{lemma} \label{abs_value_approx_lemma} For the $2\pi$-periodic function $f(x):=|x|$, $x\in[-\pi, \pi]$, let $S_n(x)$ be the $n^{\text{th}}$ partial sum of its Fourier series. Then, for some constant $C$,
\begin{equation}
\|f-S_n\|_2^2
\le \frac{C}{n^3}.\label{abs_value_approx_error_result}
\end{equation}
\end{lemma}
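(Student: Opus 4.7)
The plan is to obtain the estimate in three steps: (i) compute the Fourier coefficients of $|x|$ explicitly and show they decay like $k^{-2}$; (ii) convert the $L^2$-error into a tail sum of squared coefficients via Parseval's identity; (iii) bound that tail sum by $n^{-3}$ using an integral comparison.

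First I would compute the coefficients. Since $f(x)=|x|$ is even, the exponential Fourier coefficients reduce to cosine coefficients, and a direct integration by parts on $\int_0^\pi x\cos(kx)\,dx$ gives $\hat f_0=\pi/2$, $\hat f_k=0$ for even $k\neq 0$, and $|\hat f_k|=2/(\pi k^2)$ for odd $k$. In particular $|\hat f_k|\le C_1/k^2$ for all $k\neq 0$ with an absolute constant $C_1$. (Conceptually, this $k^{-2}$ decay is exactly what one expects: $f$ is continuous and $2\pi$-periodic but its derivative has jump discontinuities, so one integration by parts gains one power of $k$ and the jump of $f'$ produces the residual $k^{-2}$ term.)

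Next, with the normalization of \eqref{fourier_series}, Parseval's identity reads
\begin{equation*}
\frac{1}{2\pi}\int_{-\pi}^{\pi}|f(x)|^2\,dx=\sum_{k\in\mathbb{Z}}|\hat f_k|^2.
\end{equation*}
Because $S_n$ is the orthogonal projection onto the span of $\{e^{ikx}:|k|\le n\}$, the same identity applied to $f-S_n$ gives
\begin{equation*}
\|f-S_n\|_2^2=\int_{-\pi}^{\pi}|f(x)-S_n(x)|^2\,dx=2\pi\sum_{|k|>n}|\hat f_k|^2.
\end{equation*}
Plugging in the bound from step (i) yields
\begin{equation*}
\|f-S_n\|_2^2\le 2\pi\sum_{|k|>n}\frac{C_1^2}{k^4}\le 4\pi C_1^2\sum_{k=n+1}^{\infty}\frac{1}{k^4}.
\end{equation*}

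Finally, a standard integral comparison gives $\sum_{k>n}k^{-4}\le\int_n^\infty t^{-4}\,dt=(3n^3)^{-1}$, so setting $C:=4\pi C_1^2/3$ produces \eqref{abs_value_approx_error_result}. There is no real obstacle here: the only nontrivial step is the explicit computation of $\hat f_k$, and once the $k^{-2}$ decay is in hand the rest is bookkeeping. The estimate is in fact sharp in the exponent, since the odd-$k$ coefficients satisfy $|\hat f_k|\asymp k^{-2}$, giving a matching lower bound $\|f-S_n\|_2^2\gtrsim n^{-3}$ that justifies the $O(n^{-3})$ rate quoted in Section~4.
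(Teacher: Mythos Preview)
Your proposal is correct and follows essentially the same route as the paper: compute the Fourier coefficients of $|x|$ (only odd modes survive, with $|\hat f_k|\asymp k^{-2}$), apply Parseval to write $\|f-S_n\|_2^2$ as a tail sum of squared coefficients, and finish with an integral comparison to obtain the $n^{-3}$ bound. The paper works in the real cosine basis (indexing by $a_k=-\tfrac{4}{\pi}(2k-1)^{-2}$) whereas you use the exponential basis, but the argument is otherwise identical; your added remark on sharpness is also implicit in the paper's two-sided integral estimate.
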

\begin{proof}
The Fourier series expansion of $f$ is given by 
\begin{equation}
f(x)=\frac{\pi}{2}+\sum_{k=1}^\infty a_k \cos(2k-1)x,
\qquad 
a_k
:= -\frac4\pi \frac{1}{(2k-1)^2},
\end{equation}
(see Example 1, p. 23, from \citet{folland1992fourier}), and therefore
by Parseval's Theorem,
\begin{align}
\|f-S_n\|_2^2
& :=\int_{-\pi}^\pi(f(x)-S_n(x))^2dx 
 =\pi\sum_{k=n+1}^{\infty}{a_k^2}\notag
 \\&\le\pi\sum_{k=n+1}^{\infty}\Big(-\frac{4}{\pi}\frac{1}{(2k-1)^2}\Big)^2=\frac{16}{\pi}\sum_{k=n+1}^{\infty}\frac{1}{(2k-1)^4}\label{abs_value_approx_error}.
\end{align}
Since $(2k-1)^{-4}$ is a monotonically decreasing sequence, we have
$$
\int_{n+1}^\infty\frac{du}{(2u-1)^4}\le\sum_{k=n+1}^\infty\frac{1}{(2k-1)^4}\le\int_n^\infty\frac{du}{(2u-1)^4},
$$
that is,
\begin{equation}
\frac{1}{6(2n+1)^3}\le\sum_{k=n+1}^\infty\frac{1}{(2k-1)^4}\le\frac{1}{6(2n-1)^3}.\label{sum_est_by_integral}
\end{equation}    
Combining (\ref{abs_value_approx_error}) and (\ref{sum_est_by_integral}) we obtain (\ref{abs_value_approx_error_result}).
\end{proof}

\begin{lemma}\label{lemma3} Let $\mathbf{x}\in[-\pi,\pi]^d$ and $f(\mathbf{x})$ be the indicator function of the unit ball in $\mathbb{R}^d$, that is, $f(\mathbf{x}):=\mathbb{I}[\mathbf{x}\leq1]$. Let $S_R(\mathbf{x})$ be the truncated Fourier Series of $f(\mathbf{x})$, where $R \geq 1$ is the radius of the partial spherical summation and $n$ is the number of terms in the partial sum. Then, for some dimensional dependent constant $C_d$,  the following holds
\begin{equation}
    \|f-S_R\|^2_2\leq \frac{C_d}{n^{1/d}}.\label{ball_error}
\end{equation}
\end{lemma}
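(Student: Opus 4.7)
The plan is to carry out the estimate in three stages: (i) write the Fourier coefficients of the indicator of the unit ball in closed form in terms of a Bessel function, (ii) apply Parseval's identity on the torus and bound the tail via the known decay of Bessel functions, and (iii) convert the spherical cutoff radius $R$ into the number of retained modes $n$ via a volume count.

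For step (i), since the unit ball sits strictly inside $[-\pi,\pi]^d$, the coefficient integral reduces to $\hat f_{\mathbf{k}} = (2\pi)^{-d}\int_{\|\mathbf{y}\|\le 1}e^{-i\langle\mathbf{y},\mathbf{k}\rangle}\,d\mathbf{y}$. Bochner's formula for the Fourier transform of a radial function yields the closed form $\hat f_{\mathbf{k}} = (2\pi)^{-d/2}\|\mathbf{k}\|^{-d/2}J_{d/2}(\|\mathbf{k}\|)$ for $\mathbf{k}\ne\mathbf{0}$, where $J_{d/2}$ is the Bessel function of the first kind of order $d/2$. The classical large-argument expansion $J_{d/2}(r) = \sqrt{2/(\pi r)}\cos(r-(d+1)\pi/4)+O(r^{-3/2})$ then produces a pointwise bound $|\hat f_{\mathbf{k}}|^2 \le B_d\,\|\mathbf{k}\|^{-(d+1)}$ valid for $\|\mathbf{k}\|\ge 1$.

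For step (ii), Parseval's identity on the torus gives $\|f-S_R\|_2^2 = (2\pi)^d\sum_{\|\mathbf{k}\|>R}|\hat f_{\mathbf{k}}|^2 \le C_d'\sum_{\|\mathbf{k}\|>R}\|\mathbf{k}\|^{-(d+1)}$. The standard lattice-shell count $\#\{\mathbf{k}\in\mathbb{Z}^d : r\le\|\mathbf{k}\|<r+1\} = O(r^{d-1})$ (for $r\ge 1$) together with a monotone integral comparison bounds this tail by $C_d''\int_R^\infty r^{d-1}\cdot r^{-(d+1)}\,dr = C_d''/R$. For step (iii), a Gauss-type volume estimate gives $n = \kappa_d R^d + O(R^{d-1})$ and hence $R \ge c_d\,n^{1/d}$ whenever $R\ge 1$; substituting into $\|f-S_R\|_2^2 \le C_d''/R$ yields \eqref{ball_error}.

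The main obstacle is the lattice bookkeeping in step (ii): one must extract the correct dimensional power of $r$ from a sum indexed by integer vectors of fixed norm, and keep the dimensional constant under control in the boundary regime $R\approx 1$, where the Bessel asymptotic is not yet sharp and a crude uniform bound for $J_{d/2}$ on a bounded interval is needed to absorb the finitely many missing terms. Both issues are absorbed into the dimension-dependent constant $C_d$ that the statement permits, so no uniformity in $d$ is required.
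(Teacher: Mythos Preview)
Your proposal is correct and follows the same overall skeleton as the paper: compute the Fourier coefficients of the ball indicator (the paper cites Pinsky's asymptotic, which is exactly your Bessel computation), apply Parseval, bound the tail $\sum_{\|\mathbf{k}\|_2>R}\|\mathbf{k}\|_2^{-(d+1)}$, and convert $R$ to $n$ via a lattice-point count.

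The one genuine methodological difference is in how the lattice tail $\sum_{\|\mathbf{k}\|_2>R}\|\mathbf{k}\|_2^{-(d+1)}\lesssim 1/R$ is established. The paper passes from the $\ell_2$ ball to an $\ell_\infty$ ball via norm equivalence, then performs a combinatorial decomposition of $\{\|\mathbf{k}\|_\infty>\tilde R\}$ according to which coordinates exceed $\tilde R$, reducing to one-dimensional tails. You instead use the dyadic/shell count $\#\{\mathbf{k}:r\le\|\mathbf{k}\|<r+1\}=O(r^{d-1})$ and compare to $\int_R^\infty r^{-2}\,dr$. Your route is shorter and more transparent; the paper's decomposition is more hands-on but gains nothing extra here. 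Both lose only dimensional constants, which is all the statement requires.
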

\begin{proof}
For $\mathbf{x}\in\mathbb{R}^d$, denote  $\|\mathbf{x}\|_1:=|x_1|+\ldots+|x_d|$, and $\|\mathbf{x}\|_\infty:=\max_{1\le i\le d}|x_i|$. It is known that
\begin{equation}
    \|\mathbf{x}\|_2\le\|\mathbf{x}\|_1\le\sqrt{d}\|\mathbf{x}\|_2,\quad\|\mathbf{x}\|_\infty\le\|\mathbf{x}\|_1\le d\|\mathbf{x}\|_\infty,\label{norms1}
\end{equation}
which in particular implies
\begin{equation}
    \|\mathbf{x}\|_\infty\ge\frac1d\|\mathbf{x}\|_1\ge\frac1d\|\mathbf{x}\|_2.\label{norms2}
\end{equation}
From \eqref{norms1} and \eqref{norms2} it follows that
\begin{equation}
    \{\mathbf{k}\in\mathbb{Z}^d:\,\|\mathbf{k}\|_2>R\}\subset\{\mathbf{k}\in\mathbb{Z}^d:\,\|\mathbf{k}\|_\infty>R/d\},
\end{equation}
and, therefore,
\begin{equation}
    \sum_{\|\mathbf{k}\|_2>R}\frac{1}{\|\mathbf{k}\|^{d+1}_2}
    \le\sum_{\|\mathbf{k}\|_\infty>R/d}\frac{1}{\|\mathbf{k}\|_2^{d+1}}
    \lesssim \sum_{\|\mathbf{k}\|_\infty>R/d}\frac{1}{\|\mathbf{k}\|_1^{d+1}}.\label{sums}
\end{equation}
Here ``$A\lesssim B$'' means that ``$A\le C_d B$'', for some dimensional dependent constant $C_d$. 
Analogously, we write ``$A\sim B$'' if ``$A\lesssim B$'' and ``$B\lesssim A$''.
Denoting $\tilde{R}:=R/d$, 
we obtain the following decomposition
\begin{multline*}
\{\|\mathbf{k}\|_\infty>\tilde{R}\}\\
= \bigcup_{1 \leq j \leq d} \bigcup_{1 \leq i_1 \neq \dots \neq i_d  \leq d}
\Big\{ |k_{i_\alpha}|> \tilde{R}, \, 1 \leq \alpha \leq j;\\
|k_{i_\alpha}| \leq \tilde{R}, \, j < \alpha \leq d \Big\}.    
\end{multline*}
Thus the latter sum in \eqref{sums} can be estimated as follows, 
\begin{align}
&\sum_{\|\mathbf{k}\|_\infty>\tilde{R}}\frac{1}{\|\mathbf{k}\|_1^{d+1}}
 = \sum_{\|\mathbf{k}\|_\infty>\tilde{R}}\frac{1}{(\|\mathbf{k}\|_1^{(d+1)/j})^j}
\notag\\
& \leq 
\sum_{j=1}^d 
\sum_{1 \leq i_1 \neq \dots \neq i_d  \leq d}
\sum_{|k_{i_1}| > \tilde{R}} \cdots \sum_{|k_{i_j}| > \tilde{R}}
\sum_{|k_{i_{j+1}}| \leq \tilde{R}} \cdots \sum_{|k_{i_d}| \leq \tilde{R}}
\notag\\
& \qquad \times \frac{1}{|k_{i_1}|^{(d+1)/j} \cdots |k_{i_j}|^{(d+1)/j}}\notag\\
& \lesssim \sum_{j=1}^d  \tilde{R}^{d-j} \Big(\sum_{|\ell|>\tilde{R}}\frac{1}{|\ell|^{(d+1)/j}}\Big)^j  
\sim \sum_{j=1}^d \tilde{R}^{d-j} \Big(\int_{\tilde{R}}^\infty\frac{du}{u^{(d+1)/j}}\Big)^j \notag\\
&\sim \sum_{j=1}^d \frac{\tilde{R}^{d-j}}{\tilde{R}^{d+1-j}}
\sim\frac{1}{\tilde{R}}
\sim\frac1R.\label{one_over_R}
\end{align}
Combining \eqref{sums} and \eqref{one_over_R} we get
\begin{equation}
 \sum_{\|\mathbf{k}\|_2>R}\frac{1}{\|\mathbf{k}\|^{d+1}_2}\lesssim \frac1R. \label{sum_estimate}
\end{equation}

\quad\\

Let $\mathrm{SE}$ denote the squared error in the left-hand side of \eqref{ball_error}. Then,  Parseval's Theorem allows us to write
\begin{align}
    \mathrm{SE}
    & :=\int_{[-\pi,\pi]^d} |f(\mathbf{x})-S_R(\mathbf{x})|^2 \, d\mathbf{x}=\int_{[-\pi,\pi]^d}
    \Big|\sum_{\|\mathbf{k}\|_2> R}\hat{f}_\mathbf{k}e^{i\mathbf{k}\cdot\mathbf{x}}\Big|^2d\mathbf{x}\notag\\
    &=(2\pi)^d\sum_{\|\mathbf{k}\|_2> R}|\hat{f}_\mathbf{k}|^2.\notag
\end{align}
Using the estimates of the Fourier coefficients for the indicator function of a ball (\citet{pinsky1993radial}, p.~120), and denoting $\alpha:=\|\mathbf{k}\|_2-(d-1)\pi/4$, we get
\begin{align}
    \mathrm{SE}
    &=(2\pi)^d\sum_{\|\mathbf{k}\|_2> R}
    \Big[\frac{C_d}{\|\mathbf{k}\|_2^{(d+1)/2}}
    \Big\{ \sin\alpha
    +O\Big(\frac{1}{\sqrt{\|\mathbf{k}\|_2}}\Big)\Big\}\Big]^2\notag\\
    & \sim \sum_{\|\mathbf{k}\|_2> R}\frac{1}{\|\mathbf{k}\|_2^{d+1}}\Big[\sin^2\alpha
    +2 \sin\alpha \, O\Big(\frac{1}{\sqrt{\|\mathbf{k}\|_2}}\Big)
    +O\Big(\frac{1}{\|\mathbf{k}\|_2}\Big)\Big] \notag \\
    & \lesssim \sum_{\|\mathbf{k}\|_2> R}\frac{1}{\|\mathbf{k}\|_2^{d+1}}.  \label{preprelast}
\end{align}
From \eqref{sum_estimate} and \eqref{preprelast} it follows that
\begin{equation}
    \mathrm{SE}\lesssim\frac1R.\label{prelast}
\end{equation}
The number of terms $n$ in the spherical partial sum $S_R(\mathbf{x})$ is equal to the number of integer points in the $d$-ball of radius $R$, which is, according to \citet{gotze2004lattice}, approximated by the volume of such ball up to an error $O(R^{d-2})$, i.e. 
$$
n\sim R^d.
$$
Combining this with \eqref{prelast}, we get 
$\mathrm{SE}\lesssim n^{-1/d}$.
\end{proof}


\subsection{Image recognition}
We performed evaluation of the FNNs in the image recognition task using the MNIST dataset (\citet{lecun1998gradient}), which is  commonly used for training various image processing systems. It consists of handwritten digit
images, $28 \times 28$ pixels in size, organized into 10 classes (0 to 9) with 60,000 training and 10,000 test samples. Portion of training samples was used as validation data. Images were represented as vectors in $\mathbb{R}^{784}$, hidden layer size was fixed at 64 for all networks, and  classification was done based on the softmax normalization. Training was performed with Adam optimizer (\citet{kingma2014adam}). We used the cross-entropy loss and batches of size 100. Learning rate was tuned separately for each model on the validation data. Table 2 compares classification accuracy obtained by the models. 
\begin{table}[htbp]
\setlength{\tabcolsep}{7pt}
\begin{center}
\begin{tabular}{l c c c c | c c c c}
\toprule
Model & Accuracy & Learning rate\\
\midrule
Vanilla feedforward NN  & 0.9648 & 0.0096 \\
FNN of \citet{gallant1988there} & 0.9695 & 0.0045 \\
FNN of \citet{silvescu1999fourier} & 0.9659 & 0.0134  \\
FNN of \citet{liu2013fourier} & 0.9638 & 0.0034 \\
\bottomrule
\end{tabular}
\end{center}
\label{mnist_results}
\caption{Evaluation of the networks on MNIST data.}
\end{table}
As we can see, all the networks demonstrate similar performance in this task. In fact, the differences between accuracy results are not significant across the models, Pearson's Chi-square test of independence $\chi^2_3=5.6449$, $p$-value $> 0.1$.

\subsection{Language modeling}
A statistical language model (LM) is a model which assigns a probability to a sequence of words. Below we specify one type of such models based on the structurally constrained recurrent network (SCRN) of \citet{mikolov2014learning}.

Let $\mathcal{W}$ be a finite vocabulary of words. We assume that words have already been converted into indices. 
Let $\mathbf{E}\in\mathbb{R}^{|\mathcal{W}|\times d_\mathcal{W}}$ be an input embedding matrix for words --- i.e., it is a matrix in which the $w$th row (denoted as $\mathbf{w}$) corresponds to an embedding of the word $w\in \mathcal{W}$. 
Based on word embeddings $\mathbf{w_{1:k}}=\mathbf{w_1},\ldots,\mathbf{w_k}$ for a sequence of words $w_{1:k}$, the SCRN model produces two sequences of states, $\mathbf{s_{1:k}}$ and $\mathbf{h_{1:k}}$, according to the equations\footnote{Vectors are assumed to be row vectors, which are right multiplied by matrices ($\mathbf{x}\mathbf{W}+\mathbf{b}$). This choice is somewhat non-standard but it maps better to the way networks are implemented in code using matrix libraries such as {\tt TensorFlow}.}
\begin{align}
&\mathbf{s_t} = (1-\alpha)\mathbf{w_t}\mathbf{B}+\alpha\mathbf{s_{t-1}},\label{s}\\
&\mathbf{h_t} = \sigma(\mathbf{w_t}\mathbf{A}+\mathbf{s_t}\mathbf{P}+\mathbf{h_{t-1}}\mathbf{R}),\label{h}
\end{align}
where $\mathbf{B}\in\mathbb{R}^{|\mathcal{W}|\times d_s}$, $\mathbf{A}\in\mathbb{R}^{|\mathcal{W}|\times d_h}$, $\mathbf{P}\in\mathbb{R}^{d_s\times d_h}$, $\mathbf{R}\in\mathbb{R}^{d_h\times d_h}$, $d_s$ and $d_h$ are dimensions of $\mathbf{s_t}$ and $\mathbf{h_t}$, $\sigma(\cdot)$ is the logistic sigmoid function. 
The last couple of states $(\mathbf{s_k}, \mathbf{h_k})$ is assumed to contain information on the whole sequence $w_{1:k}$ and is further used for predicting the next word $w_{k+1}$ of a sequence according to the probability distribution
\begin{equation}
\Pr(w_{k+1}|w_{1:k})=\softmax(\mathbf{s_k}\mathbf{U} + \mathbf{h_k}\mathbf{V}),\label{softmax}
\end{equation}
where $\mathbf{U}\in\mathbb{R}^{d_s\times|\mathcal{W}|}$ and $\mathbf{V}\in\mathbb{R}^{d_{h}\times|\mathcal{W}|}$ are output embedding matrices. For the sake of simplicity we omit bias terms in (\ref{h}) and (\ref{softmax}). Being conceptually much simpler, the SCRN architecture  demonstrates performance comparable to the widely used LSTM model in language modeling task (\citet{kabdolov2018reproducing}), and this is why we chose it for our experiments.


We train and evaluate the SCRN model for $(d_h, d_s)$ $\in\{(40, 10)$, $(90, 10)$, $(100, 40)$, $(300, 40)\}$ on the PTB (\citet{marcus1993building}) data set, for which the standard training (0-20), validation (21-22), and test (23-24) splits along with pre-processing per \citet{mikolov2010recurrent} is utilized. We replace $\sigma$ in \eqref{h} with $\sigma_{\rm GW}$, $\sigma_{\rm S}$, and $\sigma_{\rm L}$ defined in Section~\ref{FNN}, and we refer to such modification as Fourier layers. The choice of hyperparameters is guided by the work of \citet{kabdolov2018reproducing}, except that for the Fourier layers we additionally tune the learning rate, its decay schedule and the initialization scale over the validation split.  To evaluate the performance of the language models we use perplexity (PPL) over the test set. The results are provided in Table~\ref{lm_results}. 
\begin{table}[htbp]
\setlength{\tabcolsep}{7pt}
    \centering
    \begin{tabular}{l c c c c}
         \toprule
         Activation & (40, 10) & (90, 10) & (100, 40) & (300, 40) \\
         \midrule
         $\sigma$   & 128.0 & 118.6 & 118.7 & 120.6 \\
$\sigma_{\rm GW}$   & 132.8 & 119.6 & 120.1 & 127.9 \\
$\sigma_{\rm S}$    & 144.4 & 133.4 & 127.7 & 125.9 \\
$\sigma_{\rm L}$    & 165.7 & 139.3 & 147.5 & 156.8 \\
         \bottomrule
    \end{tabular}
    \vspace{10pt}
    \caption{Evaluation of the SCRN language models on the PTB data. Columns 2--5 correspond to different configurations of the hidden ($d_h$) and context ($d_s$) states sizes.}
    \label{lm_results}
\end{table}
As one can see, the conventional sigmoid activation outperforms all Fourier activations, and, as in the case of synthetic data, the Fourier layer of \citet{gallant1988there} is better than other Fourier layers for most of the architectures. 

\section{Discussion}
The FNNs of \citet{silvescu1999fourier} \eqref{silvescu} and of \citet{liu2013fourier} \eqref{zuo_cai} have non-sigmoidal activations, which makes their optimization more difficult. Although the activation function of $f_{\mathrm{GW}}(\cdot)$ is sigmoidal, it still underperforms the standard feedforward neural network in almost all cases. We hypothesize that this is because $\sigma_{\mathrm{GW}}$ is constant outside $\left[-\frac\pi2,\frac\pi2\right]$, while $\sigma(x)=1/(1+e^{-x})$ is never constant. This means that $\forall x_1,\,x_2\in(\pi/2,\infty)$: $\sigma_\mathrm{GW}(x_1)=\sigma_\mathrm{GW}(x_2)$, i.e. the activation of \citet{gallant1988there} \eqref{gw_activ} does not distinguish between any values to the right from $\pi/2$ (and to the left from $-\pi/2$). The standard sigmoid activation $\sigma(\cdot)$, on the other hand, can theoretically\footnote{In practice, when implemented on a computer $\sigma(\cdot)$ will also be ``constant'' outside an interval $[-a,a]$, where $a$ depends on the type of precision used for computations.} distinguish between any pair $x_1,\,x_2\in\mathbb{R}$: $x_1\ne x_2$. To see whether the constant behavior of $\sigma_\mathrm{GW}$ indeed causes problems, we look at the pre-activated values $x\cdot w+b$ for $x$ from the validation split in the synthetic task of approximating $x\mapsto|x|$, $x\in[-\pi,\pi]$. The histogram of these pre-activated values for the $f_\mathrm{GW}$ with hidden layer size $n=100$ is given in Figure~\ref{pre_activated}.
\begin{figure}[htbp]
\begin{center}
    \includegraphics[width=.6\textwidth]{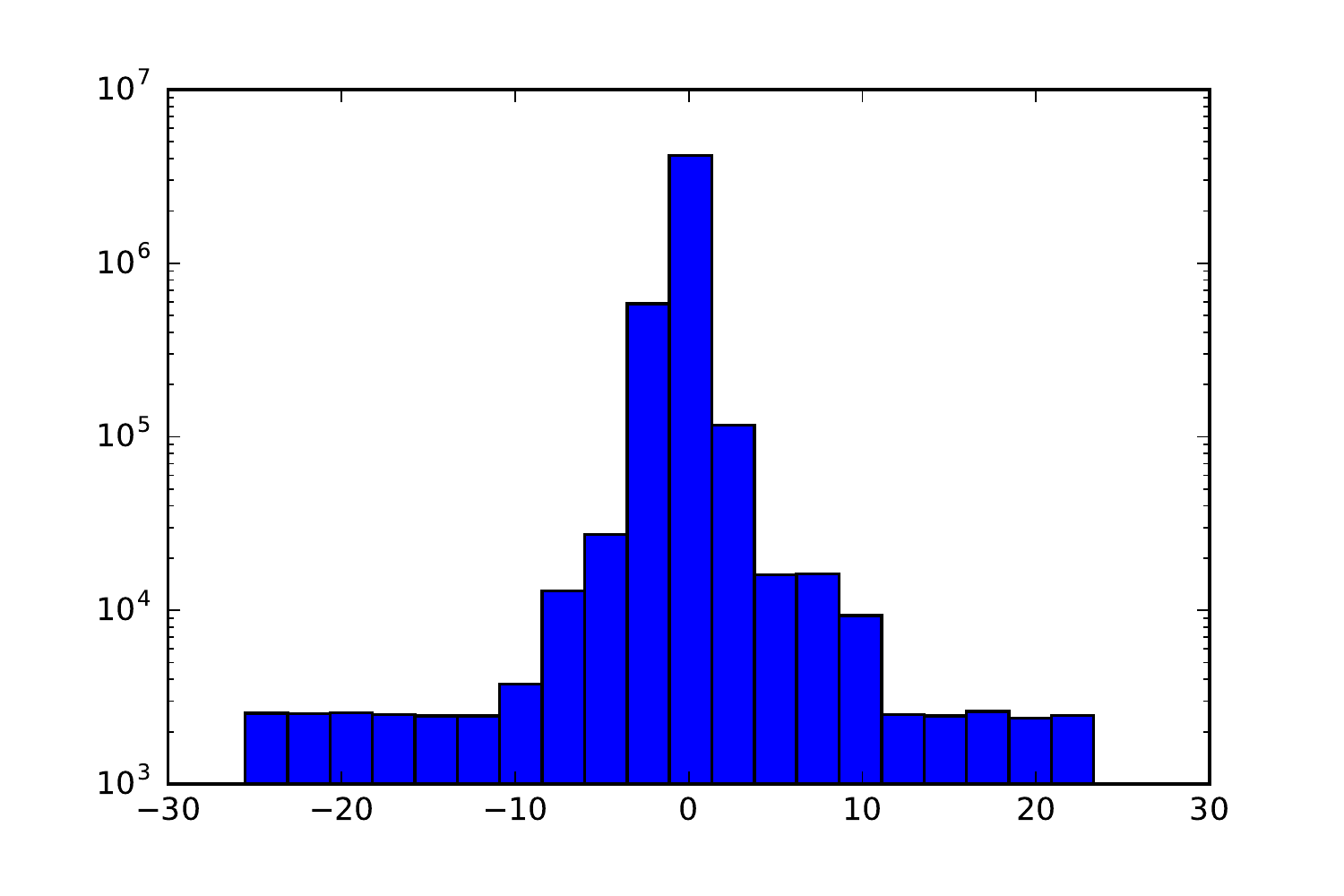}
\end{center}
\caption{Histogram of pre-activated values ($x\cdot w+b$) in the FNN of \citet{gallant1988there}. Frequencies are at log-scale.}
\label{pre_activated}
\end{figure}
It turns out that $\approx 8\%$ of pre-activated values are outside of $[-\pi/2,\pi/2]$, and this information is lost when filtered through $\sigma_\mathrm{GW}$.

\section{Conclusion and Future Work}
All Fourier neural networks are not better than the standard neural network with sigmoid activation except when it comes to modeling synthetic data. The architecture of \citet{gallant1988there} is the best among Fourier neural networks. When the function being approximated is known and depends on multiple variables, the neural networks with just one hidden layer may provide much better approximation compared to truncated Fourier series.

In this paper we focused on neural architectures with one hidden layer. It is interesting to compare Fourier neural networks in a multilayer setup. We defer such study to our future work which will also include experiments with a larger variety of functions, as well as mathematical analysis of the approximation of Barron functions by Silvescu's and Liu's Fourier neural networks.

\section*{Compliance with ethical standards}
\textbf{Conflict of Interest.} The authors declare that they have no conflict of
interest.

\bibliographystyle{plainnat}  
\bibliography{references}

\begin{thebibliography}{24}
\providecommand{\natexlab}[1]{#1}
\providecommand{\url}[1]{\texttt{#1}}
\expandafter\ifx\csname urlstyle\endcsname\relax
  \providecommand{\doi}[1]{doi: #1}\else
  \providecommand{\doi}{doi: \begingroup \urlstyle{rm}\Url}\fi

\bibitem[Alimov et~al.(1976)Alimov, Il'in, and Nikishin]{alimov1976convergence}
Sh~A Alimov, Vladimir~Aleksandrovich Il'in, and Evgenii~Mikhailovich Nikishin.
\newblock Convergence problems of multiple trigonometric series and spectral
  decompositions. i.
\newblock \emph{Russian Mathematical Surveys}, 31\penalty0 (6):\penalty0 29,
  1976.

\bibitem[Alimov et~al.(1977)Alimov, Il'in, and Nikishin]{alimov1977problems}
Sh~A Alimov, Vladimir~Aleksandrovich Il'in, and Evgenii~Mikhailovich Nikishin.
\newblock Problems of convergence of multiple trigonometric series and spectral
  decompositions. ii.
\newblock \emph{Russian Mathematical Surveys}, 32\penalty0 (1):\penalty0
  115--139, 1977.

\bibitem[Barron(1993)]{barron1993universal}
Andrew~R Barron.
\newblock Universal approximation bounds for superpositions of a sigmoidal
  function.
\newblock \emph{IEEE Transactions on Information theory}, 39\penalty0
  (3):\penalty0 930--945, 1993.

\bibitem[Cybenko(1989)]{cybenko1989approximation}
George Cybenko.
\newblock Approximation by superpositions of a sigmoidal function.
\newblock \emph{Mathematics of control, signals and systems}, 2\penalty0
  (4):\penalty0 303--314, 1989.

\bibitem[Folland(1992)]{folland1992fourier}
Gerald~B Folland.
\newblock \emph{Fourier analysis and its applications}, volume~4.
\newblock American Mathematical Soc., 1992.

\bibitem[Gallant and White(1988)]{gallant1988there}
A~Ronald Gallant and Halbert White.
\newblock There exists a neural network that does not make avoidable mistakes.
\newblock In \emph{Proceedings of the Second Annual IEEE Conference on Neural
  Networks, San Diego, CA, I}, 1988.

\bibitem[G{\"o}tze(2004)]{gotze2004lattice}
Friedrich G{\"o}tze.
\newblock Lattice point problems and values of quadratic forms.
\newblock \emph{Inventiones mathematicae}, 157\penalty0 (1):\penalty0 195--226,
  2004.

\bibitem[Hornik et~al.(1989)Hornik, Stinchcombe, and
  White]{hornik1989multilayer}
Kurt Hornik, Maxwell Stinchcombe, and Halbert White.
\newblock Multilayer feedforward networks are universal approximators.
\newblock \emph{Neural networks}, 2\penalty0 (5):\penalty0 359--366, 1989.

\bibitem[Kabdolov et~al.(2018)Kabdolov, Assylbekov, and
  Takhanov]{kabdolov2018reproducing}
Olzhas Kabdolov, Zhenisbek Assylbekov, and Rustem Takhanov.
\newblock Reproducing and regularizing the scrn model.
\newblock In \emph{Proc. of COLING}, 2018.

\bibitem[Kingma and Ba(2015)]{kingma2014adam}
Diederik~P Kingma and Jimmy Ba.
\newblock Adam: A method for stochastic optimization.
\newblock In \emph{Proceedings of ICLR}, 2015.

\bibitem[LeCun et~al.(1998)LeCun, Bottou, Bengio, and
  Haffner]{lecun1998gradient}
Yann LeCun, L{\'e}on Bottou, Yoshua Bengio, and Patrick Haffner.
\newblock Gradient-based learning applied to document recognition.
\newblock \emph{Proceedings of the IEEE}, 86\penalty0 (11):\penalty0
  2278--2324, 1998.

\bibitem[Lee et~al.(2017)Lee, Ge, Ma, Risteski, and Arora]{lee2017ability}
Holden Lee, Rong Ge, Tengyu Ma, Andrej Risteski, and Sanjeev Arora.
\newblock On the ability of neural nets to express distributions.
\newblock In \emph{Conference on Learning Theory}, pages 1271--1296, 2017.

\bibitem[Leshno et~al.(1993)Leshno, Lin, Pinkus, and
  Schocken]{leshno1993multilayer}
Moshe Leshno, Vladimir~Ya Lin, Allan Pinkus, and Shimon Schocken.
\newblock Multilayer feedforward networks with a nonpolynomial activation
  function can approximate any function.
\newblock \emph{Neural networks}, 6\penalty0 (6):\penalty0 861--867, 1993.

\bibitem[Liu(2013)]{liu2013fourier}
Shuang Liu.
\newblock Fourier neural network for machine learning.
\newblock In \emph{Machine Learning and Cybernetics (ICMLC), 2013 International
  Conference on}, volume~1, pages 285--290. IEEE, 2013.

\bibitem[Marcus et~al.(1993)Marcus, Marcinkiewicz, and
  Santorini]{marcus1993building}
Mitchell~P Marcus, Mary~Ann Marcinkiewicz, and Beatrice Santorini.
\newblock Building a large annotated corpus of english: The penn treebank.
\newblock \emph{Computational linguistics}, 19\penalty0 (2):\penalty0 313--330,
  1993.

\bibitem[McCaffrey and Gallant(1994)]{mccaffrey1994convergence}
Daniel~F McCaffrey and A~Ronald Gallant.
\newblock Convergence rates for single hidden layer feedforward networks.
\newblock \emph{Neural Networks}, 7\penalty0 (1):\penalty0 147--158, 1994.

\bibitem[Mikolov et~al.(2010)Mikolov, Karafi{\'a}t, Burget, Cernock{\`y}, and
  Khudanpur]{mikolov2010recurrent}
Tomas Mikolov, Martin Karafi{\'a}t, Lukas Burget, Jan Cernock{\`y}, and Sanjeev
  Khudanpur.
\newblock Recurrent neural network based language model.
\newblock In \emph{Proc. of INTERSPEECH}, 2010.

\bibitem[Mikolov et~al.(2015)Mikolov, Joulin, Chopra, Mathieu, and
  Ranzato]{mikolov2014learning}
Tomas Mikolov, Armand Joulin, Sumit Chopra, Michael Mathieu, and Marc'Aurelio
  Ranzato.
\newblock Learning longer memory in recurrent neural networks.
\newblock In \emph{Proc. of ICLR Workshop Track}, 2015.

\bibitem[Pinsky et~al.(1993)Pinsky, Stanton, and Trapa]{pinsky1993radial}
Mark~A. Pinsky, Nancy~K. Stanton, and Peter.~E Trapa.
\newblock Fourier series of radial functions in several variables.
\newblock \emph{Journal of Functional Analysis}, 116\penalty0 (1):\penalty0
  111--132, 1993.

\bibitem[Silvescu(1999)]{silvescu1999fourier}
Adrian Silvescu.
\newblock Fourier neural networks.
\newblock In \emph{Neural Networks, 1999. IJCNN'99. International Joint
  Conference on}, volume~1, pages 488--491. IEEE, 1999.

\bibitem[Tan(2006)]{tan2006fourier}
HS~Tan.
\newblock Fourier neural networks and generalized single hidden layer networks
  in aircraft engine fault diagnostics.
\newblock \emph{Journal of engineering for gas turbines and power},
  128\penalty0 (4):\penalty0 773--782, 2006.

\bibitem[Zuo and Cai(2005)]{zuo2005tracking}
Wei Zuo and Lilong Cai.
\newblock Tracking control of nonlinear systems using fourier neural network.
\newblock In \emph{Advanced Intelligent Mechatronics. Proceedings, 2005
  IEEE/ASME International Conference on}, pages 670--675. IEEE, 2005.

\bibitem[Zuo and Cai(2008)]{zuo2008adaptive}
Wei Zuo and Lilong Cai.
\newblock Adaptive-fourier-neural-network-based control for a class of
  uncertain nonlinear systems.
\newblock \emph{IEEE Transactions on Neural Networks}, 19\penalty0
  (10):\penalty0 1689--1701, 2008.

\bibitem[Zuo et~al.(2009)Zuo, Zhu, and Cai]{zuo2009fourier}
Wei Zuo, Yang Zhu, and Lilong Cai.
\newblock Fourier-neural-network-based learning control for a class of
  nonlinear systems with flexible components.
\newblock \emph{IEEE transactions on neural networks}, 20\penalty0
  (1):\penalty0 139--151, 2009.

\end{thebibliography}

\end{document}